\documentclass[sigconf, nonacm]{acmart}

\settopmatter{printfolios=true}

\usepackage{etoolbox}

\usepackage{setspace}

\newtheorem{definition}{Definition}

\newcommand{\stitle}[1]{\vspace*{0.4em}\noindent{\bf #1.\/}}
\newcommand{\sstitle}[1]{\vspace*{0.4em}\noindent{\bf #1:\/}}

\newcommand{\trim}{\vspace{-2mm}}
\newcommand{\ttrim}{\vspace{-1mm}}

\newcommand{\deng}[1]{{#1}}

\usepackage[linesnumbered]{algorithm2e}
\usepackage{caption}
\usepackage{subcaption}
\usepackage{bm}
\usepackage{pifont}
\usepackage{soul}

\usepackage{colortbl}
\usepackage{makecell}
\usepackage{multirow}

\usepackage[english]{babel}
\usepackage{hyperref}
\usepackage{tabularx}

\usepackage{marvosym}

\RestyleAlgo{ruled}

\newcommand{\squishlist}{
 \begin{list}{$\bullet$}
 { \setlength{\itemsep}{0pt}
   \setlength{\parsep}{3pt}
   \setlength{\topsep}{3pt}
   \setlength{\partopsep}{0pt}
   \setlength{\leftmargin}{1.2em}
   \setlength{\labelwidth}{1em}
   \setlength{\labelsep}{0.6em}
 }
}
\newcommand{\squishend}{
 \end{list}
}

\addto\extrasenglish{

}

\def\equationautorefname~#1\null{Equation~(#1)\null}

\newtheorem{MyTheo}{Theorem}

\AtBeginDocument{%
  \providecommand\BibTeX{{%
    \normalfont B\kern-0.5em{\scshape i\kern-0.25em b}\kern-0.8em\TeX}}}

\ccsdesc[500]{Information systems~Data management systems}
\ccsdesc[500]{Computing methodologies~Artificial intelligence}

\keywords{Vector database, Large language model, Machine learning systems}

\begin{document}

\title{AlayaDB: The Data Foundation for Efficient and Effective \\ Long-context LLM Inference}

\author{
Yangshen Deng*, Zhengxin You*, Long Xiang*, Qilong Li, Peiqi Yuan, Zhaoyang Hong, \\ 
Yitao Zheng, Wanting Li, Runzhong Li, Haotian Liu,
Kyriakos Mouratidis, Man Lung Yiu, \\ 
Huan Li, Qiaomu Shen, Rui Mao, Bo Tang\textsuperscript{\Letter} \\
Corresponding  email:  research@alayadb.ai
}

\renewcommand{\shortauthors}{Y. Deng, Z. You, L. Xiang, et al.}
\renewcommand{\shorttitle}{AlayaDB}

\begin{abstract}
AlayaDB is a cutting-edge vector database system natively architected for efficient and effective long-context inference for Large Language Models (LLMs) at \textsf{AlayaDB AI}.
Specifically, it decouples the KV cache and attention computation from the LLM inference systems, and encapsulates them into a novel vector database system.
For the Model as a Service providers (MaaS), AlayaDB consumes fewer hardware resources and offers higher generation quality for various workloads with different kinds of Service Level Objectives (SLOs), when compared with the existing alternative solutions (e.g., KV cache disaggregation, retrieval-based sparse attention).
The crux of AlayaDB is that it abstracts the attention computation and cache management for LLM inference into a query processing procedure,
and optimizes the performance via a native query optimizer.
In this work, we demonstrate the effectiveness of AlayaDB via (i) two use cases from our industry partners, and (ii) extensive experimental results on LLM inference benchmarks.

\end{abstract}

\maketitle

\let\thefootnote\relax\footnotetext{ *~ These authors contributed equally to this work. \\
\textsuperscript{\Letter} Corresponding author: Prof. Bo Tang.}

\section{Introduction} \label{sec:intro}
Large Language Models (LLMs) have been widely used in various real-world applications such as personal assistants~\cite{chatgpt,kimi,gemini,deepseek,qwen}, search engines~\cite{bing,perplexity,google,amazon}, code generators~\cite{deepseekcoder,copilot,cursor,qwen} and document analyzers~\cite{financial,document}.
Efficient and effective LLM inference is an open problem in the industry~\cite{llama-262k,gemini15,internlm1m,yi-6b-200k}, especially for long-context (e.g., millions of tokens) inference.
In particular, the performance of LLM inference systems is evaluated by three metrics: (1) \textsf{inference latency}, the end-to-end time cost for user tasks, (2) \textsf{generation quality}, the capabilities of LLM in various workloads, and (3) \textsf{GPU memory consumption}, the used hardware resources for the user tasks.

Many solutions have been proposed to optimize these metrics in long-context LLM inference.
They can be classified into three categories: (i) coupled architecture; (ii) KV cache disaggregation; and (iii) retrieval-based sparse attention.
vLLM~\cite{vllm}, SGLang~\cite{sglang} and HuggingFace transformers~\cite{transformers} are the most widely-used LLM inference systems in (i) coupled architecture.
LLM model computation and KV cache management are tightly coupled in these systems.
These systems achieve high generation quality as they use a full attention mechanism.
Mooncake~\cite{mooncake} and LMCache~\cite{lmcache,cachegen} are representative LLM inference systems in (ii) KV cache disaggregation.
They store the KV cache of contexts in external storage and reuse them among different LLM inference instances.
Thus, the inference latency of these systems is improved as it reuses the KV cache and reduces the expensive computations (e.g., inner product and softmax).
Recently, retrieval-based sparse attention solutions have been proposed (e.g., InfLLM~\cite{infllm} and RetrievalAttention~\cite{RA}) to alleviate the large GPU memory consumption of these systems in both (i) and (ii).
The core idea behind them is the sparse attention mechanism, i.e., only a subset of critical key and value tokens are selected to perform the attention computation.
Unfortunately, existing systems cannot simultaneously optimize the three aforementioned performance metrics, as we will elaborate in Section~\ref{sec:motivation}.

At \textsf{AlayaDB.AI}, we designed an LLM-native vector database AlayaDB to overcome the limitations of existing LLM inference systems/solutions and enable efficient and effective long-context inference in LLM era.
Specifically, for Model as a Service (MaaS)~\cite{Gan_2023} providers, the SLOs of different kinds of workloads indicate their requirements for the inference latency.
Thus, the core challenge of AlayaDB is solving a bi-objective optimization problem, i.e., \emph{meet the SLOs of different workloads by consuming less GPU memory and offering higher generation quality simultaneously.}
The core idea of AlayaDB is to decouple both KV cache and attention computation and to encapsulate them into a monolithic vector database.
The major benefits of the novel disaggregation level are three-fold.

\squishlist

\item \stitle{Lightweight LLM Inference System} The cache management and attention computation can be separated from the LLM inference engine, which lightens its burden.

\item \stitle{Interface Simplification} It simplifies the interface between LLM inference engine and KV cache service by only returning the attention result, instead of the KV cache content.

\item \stitle{Co-optimization Opportunity} It sheds light on co-optimizing attention computation and KV cache management in a monolithic vector database together.  

\squishend

At a high level, AlayaDB's role in LLM inference is comparable to the role of traditional databases~\cite{aurora, snowflak, postgres, GreenPlum, TiDB, system-R} in web applications.
Specifically, the LLM application developers only need to pay attention to the logic of their applications while AlayaDB offers efficient long-context management from their developed LLM applications.
This is analogous to web application developers focusing on the logic of their applications and leaving efficient data management to a traditional relational database.

To achieve the above vision, there are three design goals of AlayaDB: (i) ease-to-use, (ii) high generation quality, and (iii) good efficiency.
AlayaDB employs a novel system architecture and introduces end-to-end optimizations.
Firstly, it provides simple-yet-powerful abstractions and  APIs, which are compatible with the software ecosystem of LLMs.
Secondly, it handles sparse attention computation as a vector search query.
To improve the generation quality and reduce the memory consumption simultaneously, AlayaDB defines a novel query type, i.e., dynamic inner product range query (DIPR), which overcomes the limitations of the traditional top-$k$ query.
To accelerate query processing, AlayaDB includes a native query optimizer, which selects the best execution plan for efficient vector search.
Last but not least, a suite of optimization techniques (from algorithm-side to index-side, from computation to storage) has been employed in AlayaDB.

Compared to existing LLM inference systems, AlayaDB enjoys low latency, high generation quality, and low resource consumption at the same time from long-context inference. 
Our experience shows that AlayaDB greatly lowers the cost of hardware resources for handling long contexts and lightens the labor for optimizing the LLM infrastructure.
AlayaDB has already been used to support several online LLM services including chatting apps and knowledge-base QA services in our industry partners.

To sum up, the technical contributions of AlayaDB are as follows.

\squishlist

\item \textbf{Novel Decoupling Level for LLM Inference Systems:} We classify existing LLM inference solutions into three categories and analyze their limitations to handle the challenges of long-context LLM inference.
Then, we decouple the KV cache and attention computation from the LLM inference system and encapsulate them into a novel vector database system.

\item \textbf{Dynamic Vector Search Query for Sparse Attention:} We analyze the internal characteristics of sparse attention in various LLM benchmarks and real-world applications, then propose a novel dynamic vector search query, i.e., Dynamic Inner Product Range  (DIPR), to capture the dynamic nature of sparse attention, which overcomes the limitation of traditional top-$k$ query.

\item \textbf{AlayaDB System Architecture and Implementation:} We architect and implement AlayaDB for efficient and effective long-context inference. It consists of user interface, query processing engine, and vector storage engine.
AlayaDB has been used in several LLM applications by our industry partners.
To the best of our knowledge, it is the first vector database natively built for LLM inference.

\item \textbf{Extensive Evaluation:}  We conduct in-depth evaluation of AlayaDB. The results show that it is able to reduce resource consumption and offer better generation quality while guaranteeing the SLOs for various LLM workloads. 
\squishend

The remainder of the paper is organized as follows.
Section~\ref{sec:llminfer} introduces the LLM inference procedure;
Sections~\ref{sec:motivation} and~\ref{sec:alayadb} present the motivation, design goals and architecture of AlayaDB;
Sections~\ref{sec:interface}, ~\ref{sec:query}, and~\ref{sec:opt} describe AlayaDB's components;
Section~\ref{sec:applications} elaborates two use cases of AlayaDB;
Section~\ref{sec:exp} presents the experimental study results and Section~\ref{sec:con} concludes this work.

\begin{figure*}
 \small
 \centering
 \includegraphics[width=0.95\textwidth]{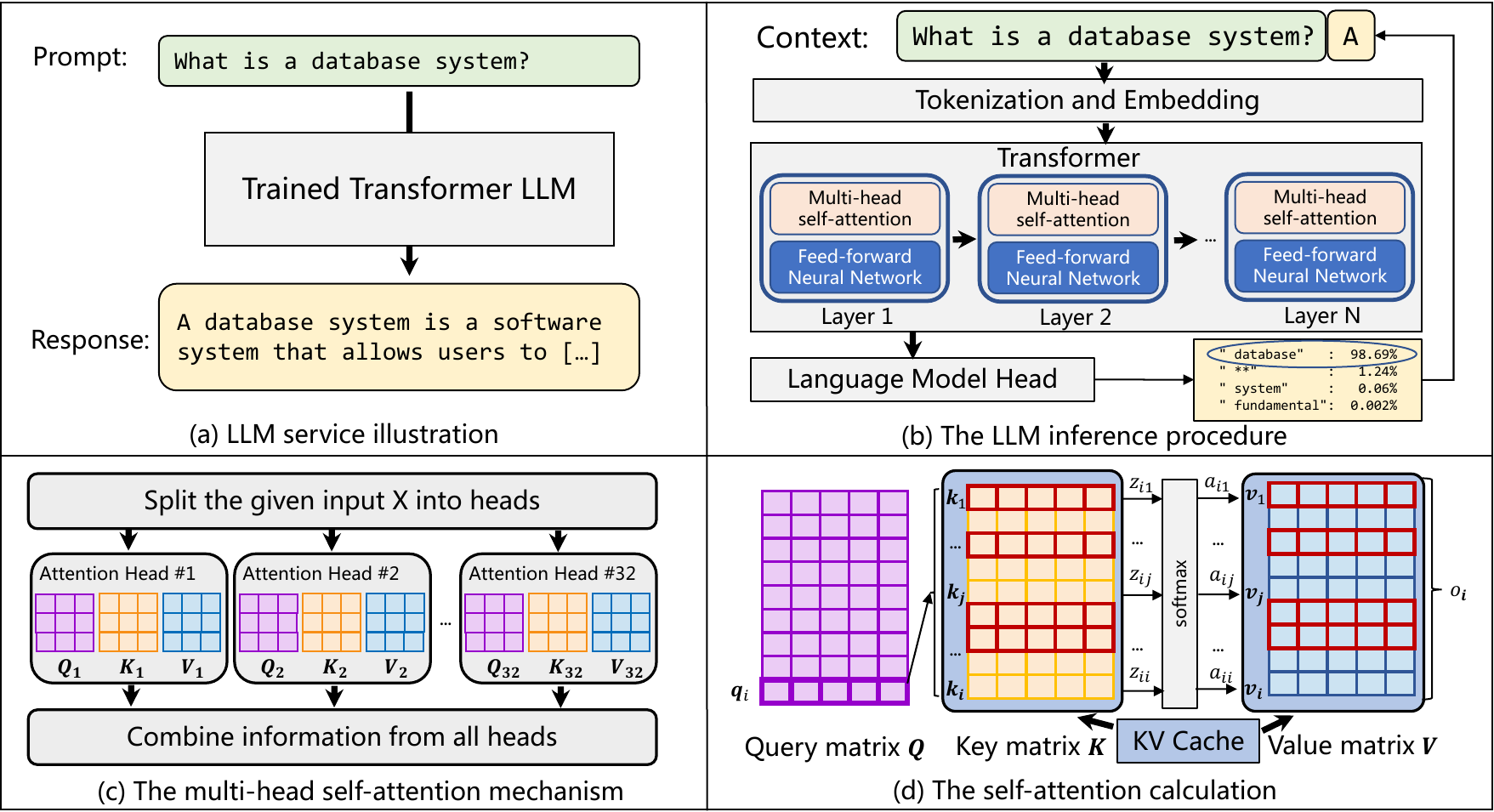}
 \caption{The concepts and illustrations of LLM inference}  \label{fig:llminfer}
\end{figure*}

\section{LLM Inference}\label{sec:llminfer}

A large language model (LLM) is a deep neural language model with billions of parameters.
The decoder-only transformer is the most prevalent architecture in LLMs, such as GPT~\cite{chatgpt}, Llama~\cite{llama}, and Qwen~\cite{qwen}.
Given a well-trained LLM model, the LLM inference generates the text in response to the user input prompt,
as shown in Figure~\ref{fig:llminfer}(a).
Actually, it generates the tokens in response text one by one.
Each token generation is a forward pass of the language model.
The generated token will be appended to the end of the input prompt to form the new context. 
The context is used to generate the next token by following the same forward pass of the language model.
The generation procedure terminates when a special token \texttt{<eot>} (end of text) is generated or the generated text reaches the predefined maximum length.

Figure~\ref{fig:llminfer}(b) depicts the major components in the LLM inference procedure. 
For the input context of LLM, it first breaks down the text into small chunks (i.e., tokens), then turns tokens into numeric representations to capture the meaning via the tokenization and embedding modules.
The embeddings of the context are the input of the transformer model, which consists of a stack of transformer layers that do all the processing.
The output of the transformer is probability scores for what the most likely next token is via the language modeling head.
Transformer LLMs include a stack of transformer layers, e.g., Llama 3.1 has 32 layers.
Each transformer layer processes its inputs and passes the results to the next layer.
For each transformer layer, it has two successive modules: (i) self-attention module, and (ii) feed-forward neural network.
The feed-forward neural network emphasizes the important features to make the output more informative.

We next elaborate the core of transformer LLM, i.e., self-attention mechanism, via the illustrated Figures~\ref{fig:llminfer}(c) and (d).
In general, the self-attention mechanism involves two major steps: (i) measuring how relevant each of the previous context tokens is to the current token being processed; and (ii) combining the information from them into a single output vector.
A well-trained LLM has three projection matrices, i.e., a query projection matrix $\bm{W}_Q$, a key projection matrix $\bm{W}_K$ and a value projection matrix $\bm{W}_V$, which are used to calculate the attention.
In particular, the self-attention mechanism starts by multiplying the input matrix $\bm{X} \in \mathbb{R}^{n \times d}$, where $n$ is the number of input vectors and $d$ is the dimensionality of the embedding vector of each token, by the projection matrices to create three new matrices, i.e., query matrix $\bm{Q}$, key matrix $\bm{K}$ and value matrix $\bm{V}$, as shown in Figure~\ref{fig:llminfer}(c).
These three matrices are the information of the input tokens in three different spaces, which are used to calculate the attention.
In recent transformer LLMs (e.g., Llama 3.1), multi-query and multi-head self-attention mechanisms are employed to improve the scalability of larger models.
For simplicity, we utilize one self-attention head for illustration in Figure~\ref{fig:llminfer}(d) as every head of multi-head attention has a distinct version of matrices of queries, keys and values, see Figure~\ref{fig:llminfer}(c).
\begin{equation}\label{eqn:attention}
\centering
    z_{ij} = \frac{\bm{q}_{i} \cdot \bm{k}_j^{T}}{\sqrt{d}};~~~
    a_{ij} = \mathsf{softmax}(z_{ij});~~
    \bm{o}_{i} = \sum_{s = 1}^{i} a_{is} \cdot \bm{v}_s
\end{equation}
As shown in Figure~\ref{fig:llminfer}(d),  to generate the $i+1$-th token $t_{i+1}$, the self-attention mechanism in each head computes the \deng{inner product} between the query vector $\bm{q}_{i} \in \mathbb{R}^{1 \times d}$ and the key vector of the past tokens $\bm{k}_j$ where $j \in [1, i]$.
The computed product is scaled by $\sqrt{d}$ and normalized via a $\mathsf{Softmax}$ function to derive the attention score $a_{ij}$.
These attention scores multiply with the value vectors $\bm{v}_s$ in value matrix $\bm{V}$ to compute the output $\bm{o}_{i}$, 
see Equation~(\ref{eqn:attention}).

\stitle{LLM Inference Phases} 
In LLM services, the LLM inference procedure of a prompt can be decomposed into two phases: prefill phase and decode phase.
Specifically, in the prefill phase the LLM processes all the input tokens in user prompts and generates the first output token.
The service level objective (SLO) of the prefill phase in LLM service is its duration, i.e., \textit{Time-To-First-Token (TTFT)}.
In the decode phase the LLM sequentially generates the answers.
This phase completes when an end-of-sequence token \texttt{<eot>} is generated or when the context reaches a specified maximum length.
The SLO in the decode phase is \textit{Time-Per-Output-Token (TPOT)}.

\stitle{KV Cache}
Recall that the last generated token is appended to the previous context and then input into the LLM for the next token generation.
In particular, the new context does another forward pass of the model.
Obviously, the performance of the decode phase can be significantly improved by caching the key and value matrices of the previous context
(see KV cache in Figure~\ref{fig:llminfer}(d)) as they do not need to be recomputed.
The KV cache is one of the core components in the self-attention mechanism which is widely used in recent LLMs and offers significant speedup of the decode phase.

\stitle{Sparse Attention}
The attention calculation in Equation~(\ref{eqn:attention}) is the computationally expensive part of LLM inference.
To make matters worse, the key and value matrices consume large GPU memory space.
The sparse attention mechanism has been proposed to improve the efficiency of the attention calculation and reduce the GPU memory consumption during the LLM inference.
The intuition of sparse attention is that only a small proportion of tokens, not all tokens in previous context, dominates the generation quality/accuracy~\cite{RA}.
For example, only the key vectors and value vectors in KV cache with red rectangles in Figure~\ref{fig:llminfer}(d) are critical vectors for the high-quality token generation.
The computation cost of the attention calculation is significantly reduced as the sparse attention only calculates a fixed size of keys (resp. values) in key matrix $\bm{K}$ (resp. value matrix $\bm{V}$),  instead of all keys and values in both key and value matrices, see Equation~(\ref{eqn:attention}).
Key vectors with high inner product scores relative to the query vector are considered important, as they have high attention scores, significantly contributing to the final output.

\section{Motivation of AlayaDB}\label{sec:motivation}
The context length of an LLM request becomes very large with the rapid development of LLM applications.
For example, users may ask LLM questions about long documents, including understanding academic papers~\cite{explainpaper}, 
getting legal assistance from law documents~\cite{lawbench,llm4lawyers}, or analyzing financial documents~\cite{financial}.
Chat applications~\cite{chatgpt,kimi,deepseek} utilize the long chat log to produce better responses for users.
The AI programming assistants leverage all code in the project to accurately generate code or identify bugs/errors~\cite{copilot,cursor}.

The long-context LLM inference is extremely expensive as its self-attention mechanism incurs high memory consumption and numerous computation operations.
In particular,  it requires $O(n)$ memory to store the KV cache, where $n$ is the length of the long context.
The compute complexity for the prefill phase is $O(n^2)$ due to the self-attention computation in Equation~(\ref{eqn:attention}) that applies to each input token. Thus, the TTFT of prefill phase is several minutes to tens of minutes when the context length is quite large.
For the decode phase, it needs $O(n)$ for each token generation.
In practice, it takes about 141.38 GB memory and 6 minutes to answer a question about the book “\emph{Database System Concepts, 7th Edition}”~\cite{database-book} (495.5K tokens), with a bfloat16 version \textsf{Llama-3-8B} model~\cite{llama-1048k} on 2 NVIDIA A800 GPUs (each has 80 GB memory).
To reduce the memory consumption and computation cost of long context LLM inference, several research studies have been proposed to reuse the KV cache of the long context and use them to serve different requests from the users.
For example, the users may ask various questions about the same book “\emph{Database System Concepts}”. Thus, the KV cache of this book can be reused to answer these different questions.
The reused KV cache reduces the latency of TTFT in prefill phase significantly, and it becomes a de facto standard in LLM inference systems.
However, the performance of long-context LLM inference still has a lot of room for improvement. 
We next analyze the existing LLM inference systems/techniques in four dimensions: (i) GPU memory consumption, (ii) inference latency, (iii) generation quality, and (iv) solution usability.

\begin{table}
\centering
\small
\caption{LLM inference solutions analysis}\label{tab:solutions}

\begin{tabular}{|c|c|c|c|c|}
\hline
\textbf{Existing}   & \textbf{GPU memory}  & \textbf{Inference}  & \textbf{Generation} & \textbf{Solution} \\
\textbf{solution}      & \textbf{consumption} & \textbf{latency} & \textbf{quality} & \textbf{usability}  \\ \hline
\ding{172}          & Large           & High               & Good     & Good                \\ \hline
\ding{173}          & Large           & Medium             & Good     & Medium              \\ \hline
\ding{174}          & Small           & ---             & Medium   & Bad                 \\ \hline
\textbf{AlayaDB}       & \textbf{Small}  & \textbf{Low}    & \textbf{Good}   & \textbf{Good}       \\ \hline
\end{tabular}
\end{table}

\begin{figure}
    \small
    \centering
    \includegraphics[width=0.95\columnwidth]{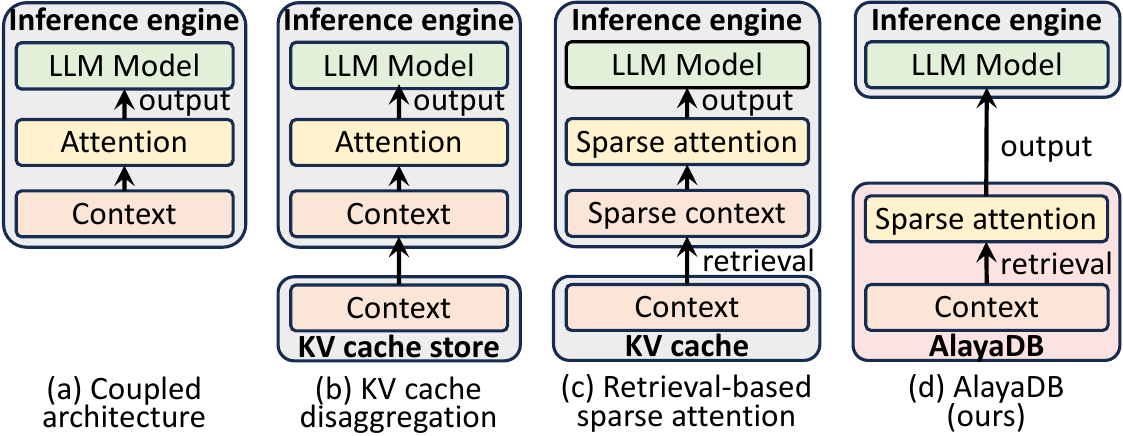}
    \trim
    \caption{Summary of LLM inference solutions}
    \label{fig:systems}
    \trim
\end{figure}

\subsection{Analysis of Existing Solutions}
In this section, we classify existing work into three categories: \ding{172} coupled architecture, \ding{173} KV cache disaggregation, and \ding{174} Retrieval-based sparse attention mechanism.
We introduce the core idea of each category and analyze the characteristics of them in detail.
Table~\ref{tab:solutions} summarizes the analyzed results of existing solutions.

\stitle{\ding{172} Coupled Architecture}
It is the widely-used LLM inference system architecture in industry, e.g., vLLM~\cite{vllm}, SGLang~\cite{sglang}, and transformers~\cite{transformers}.
The core idea of the coupled architecture is the LLM model computation and KV cache management are tightly coupled and it processes the user request in a holistic manner, as shown in Figure~\ref{fig:systems}(a).
It offers good usability with a simple user interface and high generation quality. 
However, it fails to handle long context. 
The major reasons are: (i) the large GPU memory consumption for KV cache 
and (ii) the high TTFT in prefill phase as it reuses the KV cache in a coarse manner, e.g., vLLM employs LRU policy to maintain the KV cache in limited GPU memory.

\stitle{\ding{173} KV Cache Disaggregation}
As depicted in Figure~\ref{fig:systems}(b),  several systems decouple the KV cache into a separate storage service and manage it in a stateful way.
For example, LMCache~\cite{lmcache} and Mooncake~\cite{mooncake} store the KV cache of a long context in external cheap storage  (e.g., CPU memory, disk or remote memory) after its prefill phase such that the KV cache can be reused by everyone in the future as it only needs to be loaded into the inference engine.
The inference latency of the KV cache disaggregation solutions is slightly lower than the coupled architecture as it reduces the TTFT of prefill phase by reusing KV cache better.
The generation quality of it is the same as the coupled solution as both employ full attention mechanism.
However, it is not easy to use, as it involves a lot of intrusive modifications (i.e., lots of engineering work) to the inference engine.
Moreover,  the KV cache disaggregation still consumes a large GPU memory during the decoding stage.

\stitle{\ding{174} Retrieval-based Sparse Attention}
Recently, InfLLM~\cite{infllm} and RetrievalAttention~\cite{RA} use the sparse attention mechanism to alleviate the high GPU memory consumption of these systems in both \ding{172} and \ding{173}.
In particular, they only retrieve a small subset of keys and values from offloaded KV cache for attention computation,
see Figure~\ref{fig:systems}(c).
Although these retrieval-based solutions can significantly reduce GPU memory consumption, almost all (if not all) of them are not easy to use as (i) the retrieval algorithm is hard-coded in the underlying specific LLM model and cannot be directly used on other LLM models and (ii) they lack the ability to manage and reuse the long contexts among different requests and inference engines.
Moreover, they trade off between memory consumption/inference latency and generation quality.
In particular, the generation quality of these methods is determined by the retrieved critical keys and values.
However, it is challenging to retrieve all the critical keys and values efficiently.
Existing work assumes that the number of critical vectors is fixed (i.e., $k$) and then retrieves top-$k$ critical keys and values from the offloaded KV cache.
This static method cannot achieve the good generation quality of \ding{172} and \ding{173}, as we will elaborate in Section~\ref{sec:query}.
Regarding inference latency, the retrieval-based sparse attention methods introduce extra overhead to identify the critical key and value vectors.
However, they gains benefits during the attention computation as only the selected critical keys and values will be used.
According to our internal experimental evaluation, there is no clear winner between the extra overhead and the reduced attention computation.
Thus, we use `---' in the inference latency column of \ding{174}, see Table~\ref{tab:solutions}.

\trim
\subsection{Design Goals of AlayaDB}
Motivated by the above limitations, we propose a novel architecture for efficient and effective long-context LLM inference by decoupling both the KV cache and sparse attention computation from the LLM inference engine.
In particular, we architect a vector database AlayaDB to manage the offloaded KV cache and compute the sparse attention for LLM inference, as illustrated in Figure~\ref{fig:systems}(d).
The design goals of AlayaDB are as follows.

\stitle{G1: Ease-of-use}
The first design goal of AlayaDB for long-context LLM inference engine is ease-of-use.
Thus, the user interface abstraction of AlayaDB should be simple and compatible with LLM inference engines.
With these abstractions, the LLM developers could use AlayaDB easily for efficient and effective inference in their LLM applications,  e.g., analogue to how web developers use traditional database systems in their web applications.

\stitle{G2: High Quality}
The second design goal of AlayaDB for long-context LLM inference engines is to provide high generation quality.
As mentioned above, the generation quality is determined by the quality of retrieved critical keys and values.
Thus, AlayaDB should offer the capability to identify the critical tokens.

\stitle{G3: Good Efficiency}
The third design goal of AlayaDB for long-context LLM inference engine is good efficiency.
Specifically, AlayaDB should achieve higher generation quality and lower memory consumption as much as possible for user-specified SLOs.

We are aware that many vector database systems/techniques have been proposed~\cite{milvus, manu, analyDBv, singlestore, alloydbai, diskann,  gaips, qsrp, faiss, pinecone, weaviate}, both in academia and industry.
However, to the best of our knowledge, none of them are natively designed to support efficient and effective long-context LLM inference.
In subsequent, we introduce the architecture and key components of AlayaDB.
As the last row in Table~\ref{tab:solutions} shows, AlayaDB incurs small memory consumption, low inference latency, and high generation quality simultaneously.

\section{System Overview of AlayaDB}\label{sec:alayadb}
Figure~\ref{fig:architecture} depicts the overview of AlayaDB we built at \textsf{AlayaDB.AI}.
It consists of three components: (i) user interface, (ii) query processing engine, and (iii) vector storage engine.
We briefly introduce each component in AlayaDB to elaborate on the designs for the aforementioned three design goals.

\begin{figure}
    \centering
    \small
    \includegraphics[width=0.95\columnwidth]{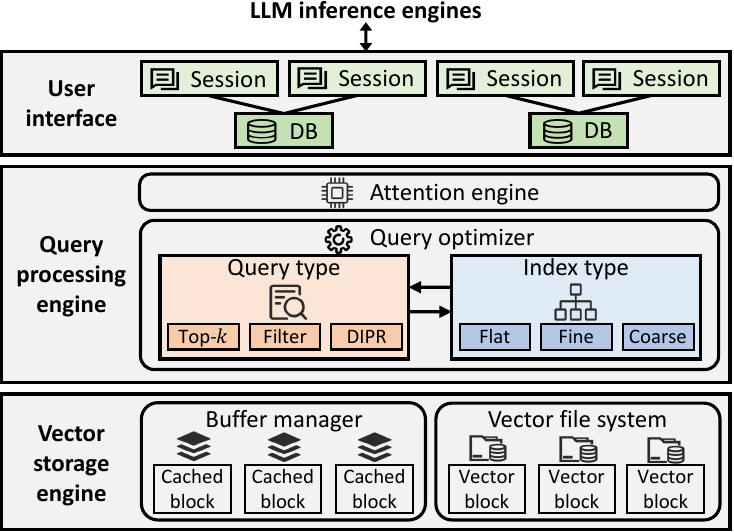}
    \caption{System overview of AlayaDB}
    \label{fig:architecture}
    \trim \trim \trim
\end{figure}

\stitle{User Interface}
The top layer of AlayaDB is the user interface component. 
It abstracts the complex attention computation and KV cache management to offer easy-to-use APIs.
Thus, LLM developers can simply leverage efficient and effective long-context LLM inferences by invoking the abstracted APIs in AlayaDB. 
This is similar to how web developers can build various applications without worrying about the underlying database management system.
Specifically, we use two widely-used concepts \verb|DB| and \verb|Session| in database community to abstract the \texttt{context} and \texttt{request} in the LLM inference procedure.
We will introduce the details in Section~\ref{sec:interface}.

\stitle{Query Processing Engine}
The middle layer of AlayaDB is the query processing engine, which is essential to achieve high quality and good efficiency goals.
It consists of a native attention engine and a query optimizer.
The native attention engine is designed for efficient sparse attention computation in Equation~(\ref{eqn:attention}).
The query optimizer is devised to identify the optimal query processing plan, which efficiently computes the critical tokens.
Unlike traditional database query optimizers, the query optimizer in AlayaDB has two major modules: (i) query type module, and (ii) index type module.
The query type module includes a set of predefined queries (e.g., top-$k$) that are used to retrieve the critical tokens from the KV cache.
The index type module has a set of indices that can be used to accelerate the predefined queries.
It is worth pointing out that both query type and index type in the query optimizer are extensible in AlayaDB.
The details of this engine are presented in Section~\ref{sec:query}.

\stitle{Vector Storage Engine}
To further improve the efficiency (both memory consumption and inference latency), we equipped AlayaDB with the vector storage engine in the bottom layer.
It includes a buffer manager and a novel vector file system.
A novel vector data layout scheme is designed in the vector file system, which could be used to improve the data access locality during query processing.
The buffer manager manages the buffered blocks of KV cache and supports high-performance keys and values retrieval.
We will show the optimizations of the vector storage engine in Section~\ref{sec:opt:storage}.

\section{User Interface}\label{sec:interface}

AlayaDB provides simple and flexible abstractions and easy-to-use APIs for users to \textsf{import context}, \textsf{reuse context} and \textsf{compute sparse attention result} for efficient and effective long-context LLM inference. 
Two core abstractions in AlayaDB are \verb|DB| and \verb|Session|. 
A \verb|DB| in AlayaDB manages all the contexts, including prompts, KV cache and vector indexes,
e.g., an analogue of \verb|DB| instance in traditional relational database systems, which include the schema, tables, and data tuples.
In a traditional database system, a database session is the connection established between an application server and a database server to enable communication and data retrieval.
Inspired by it, in AlayaDB, a \verb|Session| connects the contexts and the running inference requests from a user.
AlayaDB provides compatible APIs with HuggingFace transformers~\cite{transformers} and flash-attention~\cite{dao2022flashattention,dao2023flashattention2} library, which are the de facto standards of LLM inference and attention computation. 
The core APIs provided by AlayaDB are summarized in Table~\ref{tab:APIs}. We briefly introduce them as follows.

\begin{table}[]
\centering
\caption{AlayaDB APIs}\label{tab:APIs}
\begin{tabular}{|l|}
\hline
\textbf{DB abstraction and provided APIs}                              \\ \hline
\verb|DB.create_session(prompts) -> Session, prompts| \\ \hline
\verb|DB.import(prompts, kv_cache)| \\ \hline
\verb|DB.store(session)|\\ \hline
\textbf{Session abstraction and provided APIs}                         \\ \hline
\verb|Session.attention(q, layer) -> o|              \\ \hline
\verb|Session.update(q, k, v, layer) -> k, v|                \\ \hline
\end{tabular}
\end{table}

\squishlist

\item \verb|DB.create_session(prompts) | takes a list of prompts as input and returns a \verb|Session| object and the truncated prompts.
Given the input prompts, it reuses the longest common prefix with the stored contexts.
The reused context is in the \verb|Session| object.
The non-reused part of input prompts are the truncated prompts.
\item \verb|DB.import(prompts, kv_cache)| imports a list of computed contexts to AlayaDB for further reuse. 
Thus, its inputs are the prompts and KV cache of these contexts.
\item \verb|DB.store(session)| persists all states in a \verb|session| into a reusable context in the database.
It takes the \verb|session| {with the corresponding prompts and KV cache as the input.}

\item \verb|Session.attention(q, layer) -> o| generates the attention results of one LLM model layer for the session.
It accepts the query vectors and the layer id as the input, and returns computed attention output. 
This API can be used to replace the flash-attention APIs.

\item \verb|Session.update(q, k, v, layer) -> k, v| updates a session with the new inputs or generated tokens for one model layer. 
This API is compatible with \verb|DynamicCache.update| in huggingface transformers.
It provides an option to return the full key and value cache for manual management.

\squishend

\begin{figure} 
\centering
   \small
   \begin{tabular}{c} 	
       \includegraphics[width=0.95\columnwidth]{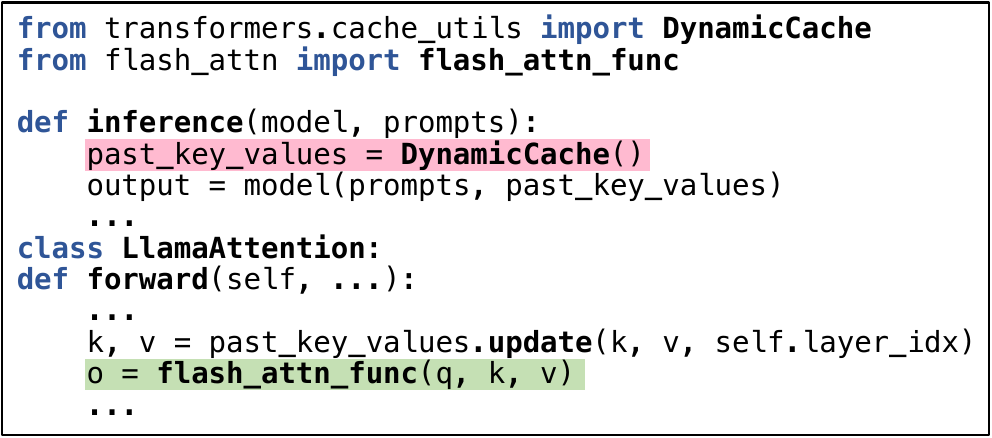}\\
       (a) Original code using flash-attention and transformers\\
       \includegraphics[width=0.95\columnwidth]{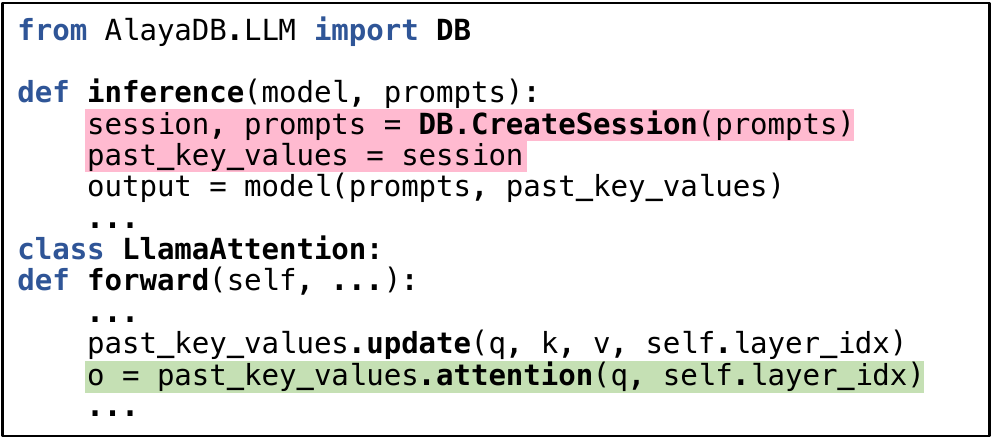}\\
       (b) Modified code using AlayaDB with transformers\\
   \end{tabular}
   \caption{Using AlayaDB APIs for LLM inference} \label{fig:apis}
   \trim \trim \trim
\end{figure}

\stitle{Example}
With the above APIs, it is easy for users to  \textsf{import context}, \textsf{reuse context} and \textsf{compute sparse attention score} for efficient and effective long-context LLM inferences upon various LLM models. 
Figure~\ref{fig:apis} shows an illustration example of AlayaDB with HuggingFace transformers, which only changes few lines of code.
In particular, Figure~\ref{fig:apis}(a) is the original code.
The \verb|inference| function is the common implementation of using an LLM model (offered by HuggingFace transformers).
For a model and a list of prompts, it creates a new \verb|DynamicCache| to manage the KV cache as the \verb|past_key_values|.
The prompts and \verb|past_key_value| are inputs of LLM model to generate the next tokens.
\verb|LlamaAttention.forward| is the implementation of an attention layer in HuggingFace transformers.
It first updates the \verb|past_key_value| with the newly generated key and value matrix, which is now a \verb|DynamicCache| with the full KV cache.
Then, it invokes \verb|flash_attn_func| attention operator on the newly generated query matrix and the full KV cache.
Figure~\ref{fig:apis}(b) shows how to use AlayaDB APIs for the above LLM inference procedure.
From the application side, users can enjoy the ability to manage and reuse the contexts in AlayaDB by simply replacing \verb|DynamicCache| with \verb|Session|, as the pink-colored lines show.
Specifically, it calls \verb|DB.CreateSession| to initialize a session and the truncated prompts for the input prompts.
The non-reusable parts (truncated prompts) are input to the LLM model together with the Session for further generation.
To further leverage the native attention computation from AlayaDB, users only need to modify the \verb|LlamaAttention.forward| to replace the flash-attention with the \verb|Session.attention|, see the last highlighted line in Figure~\ref{fig:apis}(b).

\section{Query Processing in AlayaDB}\label{sec:query}
In this section, we introduce the query processing procedure in AlayaDB.
In particular, we propose a novel query type in Section~\ref{sec:query:dipr}, which captures the dynamic nature of sparse attention in LLM inference.
In Section~\ref{sec:query:optimizer}, we introduce the query optimizer of AlayaDB.

\subsection{Dynamic Inner Product Range Query (DIPR)}\label{sec:query:dipr}

\begin{figure}
    \centering
    \small
    \includegraphics[width=0.95\linewidth]{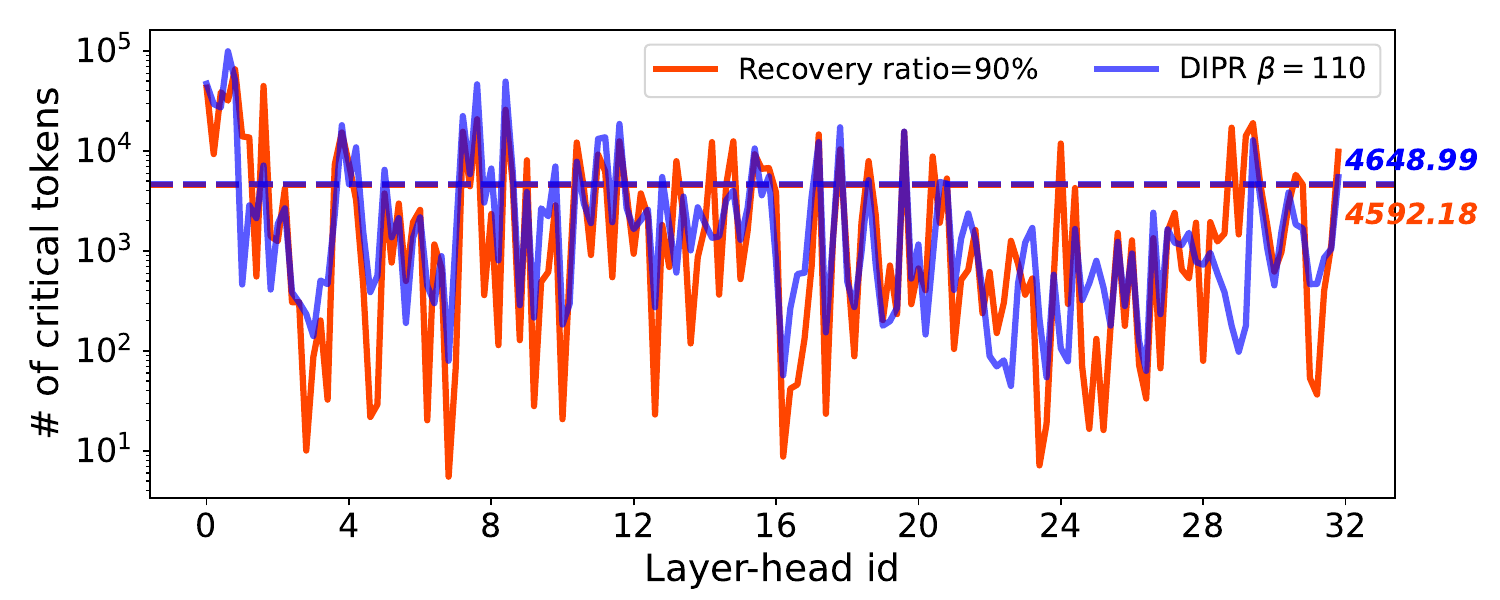}
    \trim
    \caption{The number of selected tokens in different heads}
    \label{fig:dynamic-heads}
    \trim
\end{figure}

\begin{table}
\small
\centering
\caption{The number $k$ of required tokens in different tasks}

\label{tab:tasks}
    \begin{tabular}{||c|c|c||c|c|c||}      \hline
      Task & k & proportion & Task & k & proportion\\ \hline \hline
        \textsf{Qasper} & 350 & 9.67\% & \textsf{LCC} & 65 & 5.26\% \\  \hline
        \textsf{Passage R.} & 250 & 2.69\% & \textsf{HotpotQA} & 200 & 2.19\% \\ \hline
        \textsf{QMSum} & 150 & 1.41\% & \textsf{TriviaQA} & 20 & 0.24\% \\       \hline
    \end{tabular}
\end{table}

\subsubsection{Limitations of Top-$k$ Query}\label{sec:query:dipr:moti}
In sparse attention, a subset of critical key and value vectors are retrieved to approximately compute the attention output.
Thus, the \textsf{effectiveness} of the computed attention output is determined by the number of retrieved critical tokens.
The \textsf{efficiency} of LLM inference also depends on the cost to retrieve these critical tokens.
Almost all (if not all) existing work~\cite{RA,infllm,adakv,quest,pyramidkv,duoattention,h2o} utilize top-$k$ query for critical token retrieval. 
The value of $k$ is a pre-defined hyper-parameter, and it is applied to all attention heads among all layers.
It means the top-$k$ query assumes the number of critical tokens is the same among \textit{different tasks} and \textit{different heads}.
However, this assumption is probably not true in various LLM applications.
We summarize two crucial observations as follows, which contradict this assumption.
These observations are summarized from the user experiences of our product's industry customers,
and we reproduce them in widely-used LLM benchmarks to follow the DeWitt Clause.

\stitle{Observation I: the number of critical tokens significantly varies in different \textit{heads}}
The transformer-based LLM model includes multiple layers and every layer has multiple heads.
We conduct an experiment with \textsf{Llama-3-8B-Instruct-262k} model~\cite{llama-262k} on the KV retrieval dataset in \textsf{$\infty$-Bench}~\cite{inf-bench} to investigate how many critical tokens are needed to result in a good approximation of the full attention scores.
We measure the accuracy of this approximation with the recovery ratio~\cite{RA}, which represents the proportion of the total attention scores accounted for by the attention scores of the selected critical tokens.
The red curve in Figure~\ref{fig:dynamic-heads} shows the number of tokens needed to achieve a recovery ratio of 90\% for each head (randomly sampled five heads per layer), which significantly varies among different heads. 
For example, it needs on average 42,979.85 tokens in layer 0 head 5, which is much larger than the 53.36 tokens in layer 31 head 5.

\stitle{Observation II: different \textit{tasks} require different number of critical tokens}
We conducted experiments on various tasks in \textsf{LongBench}~\cite{longbench} to explore the number of critical tokens for LLM inference in different tasks.
These tasks cover key long-text applications including \textsf{single-doc QA (Qasper)}, \textsf{synthetic tasks (Passage Retrieval)}, \textsf{multi-doc QA (HotpotQA)}, \textsf{summarization (QMSum)}, \textsf{code completion (LCC)}, and \textsf{fewshot learning (TriviaQA).}
Table~\ref{tab:tasks} lists the number of tokens $k$ (resp. its proportion to the context length) required for the top-$k$ query based sparse attention to achieve the same accuracy as full attention in these tasks. 
Obviously, the number of critical tokens varies widely across tasks, ranging from 20 (0.24\%) to 350 (9.67\%).
For the simple tasks (e.g., \textsf{TriviaQA}), they only need a few tokens as the answer can be obtained from a short paragraph of context.
In contrast, complex tasks (e.g., \textsf{Qasper}) require a large amount of tokens to understand the whole context then return correct answers.

\stitle{\underline{Take-away message}} The nature of sparse attention is to use a dynamic set of critical tokens to generate high-quality responses (w.r.t. full attention) in different tasks and different heads of the transformer-based LLM models.
The traditional top-$k$ query fails to capture the dynamic nature of sparse attention as it uses a fixed and static $k$,
which always results in either low generation quality (i.e., retrieving too few critical tokens) or high computation cost (i.e., retrieving too many critical tokens).

\subsubsection{From Attention to DIPR}\label{sec:query:dipr:deduce}
To overcome the limitation of the traditional top-$k$ query with static and fixed $k$ for different heads and tasks,
we propose Dynamic Inner-Product Range query (DIPR) to capture the dynamic nature of sparse attention.
In particular, DIPR adaptively determines the number of \textit{critical tokens} in different tasks and heads.
We first formally define the critical token \deng{considered by DIPR} in Definition~\ref{def:critical_token}.

\begin{definition}[Critical token]
\label{def:critical_token}
Give the definition of attention score in Equation~(\ref{eqn:attention}), considering all key vectors in the key matrix $\bm{K}= [\bm{k}_1, \cdots, \bm{k}_n ]$, the key $\bm{k}_j$ is a critical token for query vector $\bm{q}_i$ if and only if 
$a_{ij} \geq \alpha \times \max_{s \in [1,n]}(a_{is}),$
where $\alpha$ is a proportion threshold and ranges in $[0,1]$.
\end{definition}

The intuition of DIPR query is finding all tokens which are larger than a given proportion $\alpha$ of the token with maximum inner product as all these tokens are critical.
We next transform the critical token in Definition~\ref{def:critical_token} to an inner product-based version in Definition~\ref{def:ip_critical_token}.
Theorem~\ref{theorem:dipr} guarantees the correctness of the definition transformation.
Interestingly, Definition~\ref{def:ip_critical_token} means the DIPR query explicitly considers the attention computation in Equation~(\ref{eqn:attention}).

\begin{definition}[Inner Product-based Critical token]
\label{def:ip_critical_token}
Considering all key vectors in the key matrix $\bm{K}= [\bm{k}_1, \cdots, \bm{k}_n ]$, the key $\bm{k}_j$ is a critical token for query vector $\bm{q}_i$ if and only if 
$\bm{q}_i \cdot \bm{k}_j^T \geq \max_{s \in [1,n]}(\bm{q}_i\cdot\bm{k}^T_s) - \beta, \text{where  }  \beta = - \sqrt{d} \times ln(\alpha).$
\end{definition}

\begin{MyTheo} \label{theorem:dipr}
The critical token in Definition~\ref{def:critical_token} is equivalent to the inner product-based critical token in Definition~\ref{def:ip_critical_token}.
\end{MyTheo}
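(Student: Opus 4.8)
The plan is to establish the equivalence by a direct chain of if-and-only-if transformations, starting from the inequality in Definition~\ref{def:critical_token} and arriving at the one in Definition~\ref{def:ip_critical_token}, exploiting only the explicit form of the softmax in Equation~(\ref{eqn:attention}) together with the monotonicity of the exponential and the logarithm. First I would write the softmax explicitly as $a_{is} = e^{z_{is}}/Z_i$ with the common normalizer $Z_i = \sum_{t=1}^{n} e^{z_{it}} > 0$, where $z_{is} = (\bm{q}_i \cdot \bm{k}_s^T)/\sqrt{d}$. The crucial observation is that $Z_i$ is a single positive constant shared by every term $a_{is}$, so it appears on both sides of the inequality $a_{ij} \ge \alpha \max_{s}(a_{is})$ and cancels, leaving the equivalent condition $e^{z_{ij}} \ge \alpha \max_{s} e^{z_{is}}$.

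Next I would use the fact that $a_{is}$ is a strictly increasing function of $z_{is}$ for fixed $Z_i$, so that taking the maximum commutes with the exponential, i.e. $\max_{s} e^{z_{is}} = e^{\max_{s} z_{is}}$. This reduces the condition to $e^{z_{ij}} \ge \alpha\, e^{\max_{s} z_{is}}$. Applying the monotone natural logarithm to both positive sides then gives $z_{ij} \ge \max_{s} z_{is} + \ln\alpha$. Substituting $z_{is} = (\bm{q}_i \cdot \bm{k}_s^T)/\sqrt{d}$ and multiplying through by $\sqrt{d} > 0$ yields $\bm{q}_i \cdot \bm{k}_j^T \ge \max_{s}(\bm{q}_i \cdot \bm{k}_s^T) + \sqrt{d}\,\ln\alpha$. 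Since $\beta = -\sqrt{d}\,\ln\alpha$, this is exactly $\bm{q}_i \cdot \bm{k}_j^T \ge \max_{s}(\bm{q}_i \cdot \bm{k}_s^T) - \beta$, the condition of Definition~\ref{def:ip_critical_token}; as every step was an equivalence, the two definitions select the same set of keys $\bm{k}_j$ for any fixed query $\bm{q}_i$.

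The argument is essentially a change of variables, so there is no deep obstacle; the only point requiring care is the validity of the logarithm step, namely the boundary case $\alpha = 0$. With $\alpha = 0$ the threshold in Definition~\ref{def:critical_token} is trivially satisfied by every token, while $\beta = -\sqrt{d}\,\ln\alpha \to +\infty$ renders the threshold in Definition~\ref{def:ip_critical_token} vacuous as well, so the equivalence still holds by a degenerate-case check. For $\alpha \in (0,1]$ we have $\ln\alpha \le 0$, hence $\beta \ge 0$, which confirms that $\beta$ plays the role of a non-negative additive slack below the maximum inner product. I would include a one-line remark to this effect so that the geometric meaning of the transformed condition, selecting all keys whose inner product lies within $\beta$ of the largest, is made transparent.
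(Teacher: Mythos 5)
Your proof is correct and follows essentially the same route as the paper's: cancel the shared softmax normalizer, commute the maximum with the exponential, take logarithms, rescale by $\sqrt{d}$, and substitute $\beta = -\sqrt{d}\ln\alpha$. The only addition is your explicit check of the degenerate case $\alpha = 0$ and the observation that $\alpha \in (0,1]$ forces $\beta \geq 0$, which the paper leaves implicit but which is a harmless and reasonable clarification.
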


\begin{proof}
\small
\begin{align*}
& \quad a_{ij} \geq \alpha \times \max_{s \in [1,n]}(a_{is}) 
\Leftrightarrow \quad \frac{\exp(z_{ij})}{\sum_{t=1}^n\exp(z_{it})} \geq \alpha \times \max_{s \in [1,n]} \left(\frac{\exp(z_{is})}{\sum_{t=1}^n\exp(z_{it})} \right) \\
& \Leftrightarrow  \quad \exp(z_{ij}) \geq \alpha \times \max_{s \in [1,n]}(\exp(z_{is}))  
\Leftrightarrow \quad z_{ij} \geq \ln(\alpha) + \max_{s \in [1,n]}(z_{is})  \\
& \Leftrightarrow  \quad \bm{q}_i\cdot\bm{k}^T_j \geq \sqrt{d} \times \ln(\alpha) + \max_{s \in [1,n]}(\bm{q}_i\cdot\bm{k}^T_s) 
\end{align*}
The proof completes by setting $\beta$ as $- \sqrt{d} \times ln(\alpha)$.
\end{proof}

Last, we formally define the novel DIPR query in Definition~\ref{def:dipr}.

\begin{definition}[Dynamic Inner-Product Range Query, DIPR($\bm{q},\beta$)]
\label{def:dipr}
Given a key matrix $\bm{K}= [\bm{k}_1, \cdots, \bm{k}_n ]$, a query vector $\bm{q}_i$ and a parameter $\beta \geq 0$, 
the DIPR query returns a subset $\bm{cK}$ of $\bm{K}$, which includes all inner product-based critical tokens.
\end{definition}

The advantages of our novel DIPR query are three-fold:
(i) For a given $\beta$, different numbers of critical tokens will be retrieved by different tasks and heads in DIPR query.
Thus, it explicitly considers the dynamic nature of sparse attention;
(ii) the input parameter $\beta$ of DIPR query directly considers the critical tokens by the attention score of every key,
however, the top-$k$ query utilizes the absolute rank of every key's attention score;
and
(iii) the core computation of DIPR is the inner product $\bm{q}_i\cdot\bm{k}^T_j$, which does not introduce extra overhead and the optimizations for inner product-based top-$k$ query can be directly adopted.
We demonstrate the effectiveness of our novel DIRP query by the experiments in Figures~\ref{fig:dynamic-heads} and~\ref{fig:dynamic-workload}.
In particular, the blue curve in Figure~\ref{fig:dynamic-heads} shows the number of retrieved critical tokens of DIPR query by setting $\beta$ to $110$, which is very close to the number of tokens required to achieve 90\% recovery ratio.
In Figure~\ref{fig:dynamic-workload}, we present the obtained results by varying $\beta$ and $k$ in DIPR query and top-$k$ query for \textsf{Passage R.} and \textsf{LCC} tasks, respectively.
It confirms the DIPR query achieves higher accuracy with fewer retrieved tokens when compared with top-$k$ query.

\begin{figure}
    \small
    \centering
    \subcaptionbox{Passage R.}{\trim\includegraphics[width=0.5\linewidth]{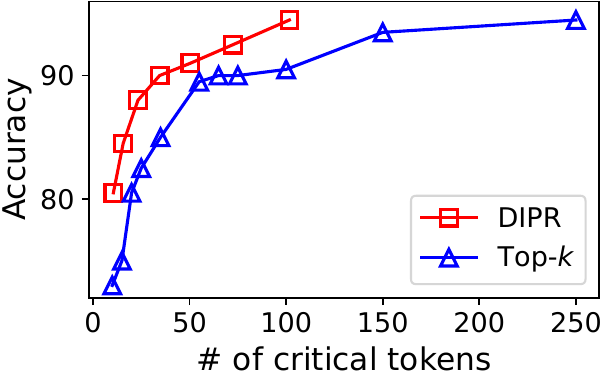}}
    \subcaptionbox{LCC}{\trim\includegraphics[width=0.47\linewidth]{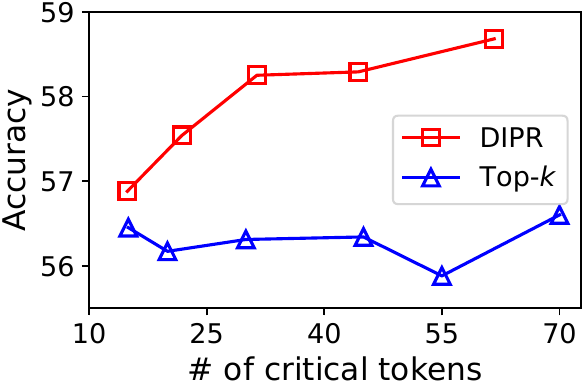}}
    \caption{The number of critical tokens in different tasks}    \label{fig:dynamic-workload}
\end{figure}

\subsubsection{DIPR Query Processing} \label{sec:query:dipr:process}
The top-$k$ query processing algorithms efficiently return a sized-$k$ set of critical tokens for every query vector by exploiting widely used graph indices on key vectors, e.g., HNSW~\cite{hnsw}, NSG~\cite{nsg} and RoarGraph~\cite{roargraph}.
However, they cannot be directly used to process DIPR query as DIPR query returns a variable length of critical tokens w.r.t. the maximum inner product value of the query $\bm{q}_i$ and key matrix $\bm{K}$ for different tasks and different heads.
In this section, we devise the first approximate DIPR query processing algorithm DIPRS.
There are two principles of DIPRS algorithm design: 
(i) it should explore more points to find the larger inner product value quickly; 
and (ii) it should reduce non-critical point explorations.

\autoref{algo:diprs} shows the pseudocode of the DIPRS algorithm, which follows the above two principles.
Specifically, it utilizes the widely used graph-based indices as the fundamental building block as they offer high recall and good efficiency for inner product-based vector similarity search.
Given an input $\beta$, the number of returned tokens in the critical token set $\bm{cK}$ is \textit{dynamic} and \textit{unknown} in advance, until the token with maximum inner product value is found.
The core ideas of DIPRS algorithm are (1) maintaining an unordered candidate list with variable capacity,
and (2) progressively reducing the search space with the best-so-far inner product value.
The subroutine \textsf{tryAppend} (Line 10) decides whether the given point $\bm{k}$ should be appended into candidate list or not.

We next briefly present how \autoref{algo:diprs} achieves both above intuitions with the illustration example in \autoref{fig:diprs}.
To achieve (i), we set a capacity threshold $l_0$. 
When the list capacity is lower than $l_0$, it explores all points without pruning (see Line 13).
As shown in \autoref{fig:diprs}(a), 2 is added to the list even though it is not critical.
For (ii), after reaching the capacity threshold, it does not append the non-critical points to the list to reduce the search space.
Figures~\ref{fig:diprs}(b) and (c) show that 3 is pruned and 7 is appended w.r.t. the current maximum inner product value, respectively.

\begin{algorithm}[t]
\small
\setstretch{0.85}
\caption{DIPRS($G$, $\bm{q}$, $\bm{k}_0$, $l_0$, $\beta$)}
\label{algo:diprs}
\SetAlgoNoEnd
\SetAlgoLined
\SetAlgoNlRelativeSize{0}
\DontPrintSemicolon
    \KwIn{Graph $G$, query $\bm{q}$, start key $\bm{k}_0$, capacity threshold $l_{0}$, and $\beta$}
    \KwOut{Critical token set $\bm{cK}$}
    \BlankLine
    Initialize a list $C$ with start key vector $\bm{k}_0$ \\
    $i \leftarrow 0$    \\
    \While{\upshape $i < C.$\textsf{capacity}$()$} {
        $\bm{c}_i \leftarrow$ the $(i+1)$-th key vector in $C$ \\
        $i \leftarrow i + 1$    \\
        \ForEach{\upshape unvisited neighbor $\bm{k}$ of $\bm{c}_i$ in $G$} {
            \textsf{tryAppend}$(\bm{q}, \bm{k}, \beta, C, l_0)$
        }
    }
    $\hat{\bm{c}} \leftarrow$ the closest point to $\bm{q}$ in $C$    \\
    \Return{$ \bm{cK} \leftarrow \{\bm{c} | \bm{c} \in C, \bm{q}\cdot\bm{c}^T \geq \bm{q}\cdot\hat{\bm{c}}^T - \beta\}$}

    \SetKwProg{Procedure}{Procedure}{:}{\KwRet}
    \Procedure{\upshape \textsf{tryAppend}$(\bm{q}, \bm{k}, \beta, C, l_0)$}{
        $\hat{\bm{c}} \leftarrow$ the closest point to $\bm{q}$ in $C$    \\
        Mark $\bm{k}$ as visited   \\
        \If{$C.$\upshape \textsf{capacity}$() \leq l_0$ or $\bm{q}\cdot\bm{k}^T \geq \bm{q}\cdot\hat{\bm{c}}^T - \beta$}{
            $C$.\textsf{append}($\bm{k}$) \\
        }
    }
\end{algorithm}

\subsection{Query Optimizer in AlayaDB} \label{sec:query:optimizer}
Except for the traditional top-$k$ query and our novel proposed DIPR query, we believe other auxiliary queries can be defined to achieve sparse attention, i.e., retrieving a subset of critical keys and values for high-quality generation.
However, the processing performance of these queries significantly varies among different hardware settings and workload characteristics.
Thus, it is crucial to provide a query optimizer in AlayaDB, which assists the LLM application developer in choosing the best query type with its underlying index structure.
In AlayaDB, we consider three query types (e.g., top-$k$, DIPR and filter query) and three index types (e.g., coarse-grained index, fine-grained index, and flat index). 
Interestingly, both query and index types can be extended in AlayaDB for efficient and effective sparse attention.
We next introduce the core idea of each index type and analyze their characteristics in Table~\ref{tab:index}.
\squishlist
\item \textbf{Coarse-grained index}.
It groups the adjacent tokens into blocks, where each block is represented by several vectors. 
It only computes the inner products between query and representative vectors during the retrieval 
and selects the critical blocks for attention computation.
This kind of algorithms includes InfLLM~\cite{infllm}, Quest~\cite{quest} and PQCache~\cite{pqcache}. 
These methods usually require a large GPU memory to cache the blocks and they can provide a very low latency for LLM inference.

\begin{figure} 
\centering
    \includegraphics[width=0.9 \columnwidth]{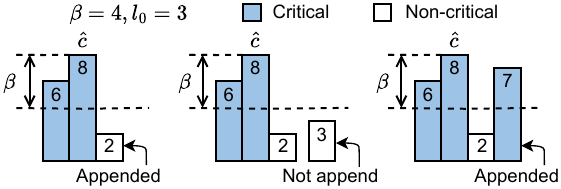}\\
   \begin{tabular}{ccc} 	
       \hspace{0.02\columnwidth} (a) $|C| \leq l_0$ & \hspace{0.1\columnwidth} (b) Point pruning & 
       \hspace{0.01\columnwidth} (c) $\bm{q}\cdot\bm{k}^T \geq \bm{q}\cdot\hat{\bm{c}}^T - \beta$ \\
   \end{tabular}
   \trim \trim 
   \caption{Three cases of \textsf{tryAppend} in DIPRS} \label{fig:diprs}
    \trim  \trim 
\end{figure}

\item \textbf{Fine-grained index}.
It builds the traditional vector search indexes on the key-level, e.g., indexing all key vectors by a graph (a.k.a., graph indices).
It quickly and accurately locates a small number of critical tokens in the index, which can be efficiently computed on CPU.
However, due to the expensive random memory access during index traversal, it can be slow when the number of used critical tokens is large, e.g., $k$ is large in top-$k$ queries.

\item \textbf{Flat index}. It scans all the keys to find the critical tokens on CPU. Compared to fine-grained indices, it is less efficient when the number of critical tokens are small due to redundant scans.
However, it can be more efficient when the number of critical tokens is large due to the sequential memory access.
\squishend

Inspired by the rule-based query optimizer in database systems, AlayaDB implements a unified and extensible optimizer to select an optimized query plan (including specified query type and index type) for attention computation. 
The workflow of the rule-based query optimizer in AlayaDB is shown in Figure~\ref{fig:optimizer}.
It identifies the context length at first.
Query to the short contexts will be processed directly with full attention. 
For the long contexts, if the context involves partial reuse, an attribute filtering predicate containing the length of the reused prefix is applied to the query, as we will introduce in Section~\ref{sec:opt:alg}.
Then the optimizer identifies GPU memory budget, which is set to the available GPU memory by default and can be manually set by users.
If the budget is enough, the query will be processed as top-$k$ queries with coarse-grained indices, i.e., InfLLM~\cite{infllm} in AlayaDB.
If the GPU memory budget is limited, the optimizer will choose DIPR query and select the index type based on the layer id.
From production environments of LLM inference and experimental evaluation benchmarks (see Figure~\ref{fig:dynamic-heads}), we observed that the first layer requires a large number of tokens to maintain the generation quality.
Thus, the optimizer of AlayaDB chooses flat indices for the first layer and uses graph-based DIPRS for the other layers.
It is still an open problem in optimizing the sparse attention with different query types and index types.
However, query optimization is widely studied in our database community, we hope the 
researchers in our community can solve it together. 

\begin{table}[]
\centering
\small
\caption{Characteristics of index types}\label{tab:index}
\begin{tabular}{|c|c|c|c|c|}
\hline
\textbf{\begin{tabular}[c]{@{}c@{}}Index\\ type\end{tabular}} & \textbf{\begin{tabular}[c]{@{}c@{}}Supported\\ query type\end{tabular}} & \textbf{\begin{tabular}[c]{@{}c@{}}GPU memory\\ comsumption\end{tabular}} & \textbf{\begin{tabular}[c]{@{}c@{}}Latency\\ small $k$\end{tabular}} & \textbf{\begin{tabular}[c]{@{}c@{}}Latency\\ large $k$\end{tabular}} \\ \hline
Coarse        & Top-$k$, Filter                                                      & Large                                                                     & Low                                                                & Low                                                                \\ \hline
Fine          & Top-$k$, Filter, DIPR                                                      & Small                                                                     & Low                                                                & High                                                               \\ \hline
Flat          & Top-$k$, Filter, DIPR                                                & Small                                                                     & Medium                                                             & Medium                                                             \\ \hline
\end{tabular}
\end{table}

\begin{figure}
    \small
    \centering
    \includegraphics[width=0.9\linewidth]{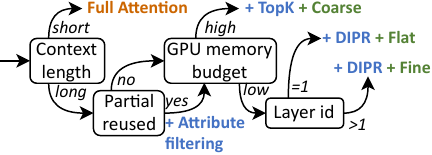}
    \ttrim
    \caption{Rule-based query optimizer in AlayaDB}
    \label{fig:optimizer}
    \trim \ttrim
\end{figure}

\section{Performance Optimization in AlayaDB}\label{sec:opt}

\subsection{Query Processing Optimization}\label{sec:opt:alg}
\stitle{Window Caching Enhanced DIPR}
Window caching retains a window of initial and last tokens during LLM inference, which is a standard technique in existing sparse attention algorithms~\cite{RA,streamingllm,infllm,snapkv,magicpig}.
The intuition of window cache is that those tokens usually contribute large attention weights.
AlayaDB adopts the window cache mechanism and caches the window in GPU memory.
Interestingly, the cached window can be further utilized to further enhance the quality of DIPR query results.
Recall that the core challenge of DIPR queries is to correctly identify the key vector with the maximum inner product value. 
Interestingly, our engineers observed that the key vector with maximum inner product value has a large probability in the cached windows.
For example, for dataset \textsf{math\_find} on the model \textsf{Llama-3-8B-Instruct-262k}, a window of 32 (initial) + 32 (last) tokens can cover almost 98\% of the key vectors with the maximum inner product values.
Motivated by this observation, we enhance DIPRS by taking the maximum inner product values in both the candidate list and the cached window into consideration.
It improves the performance of DIPRS by reducing the number of unnecessary tokens explored.

\stitle{Flexible Context Reuse By Attribute Filtering}
\label{sec:query:filter}
When a new session containing a full context is stored in AlayaDB, its vector index can be reused for efficient generation via sparse attention.
However, when a new session contains only a partial prefix of a stored context, the index cannot be reused.
This is a common case in practice. 
For example, a stored context contains a book and user A's conversation, while the incoming session of user B contains the same book but with new questions.
The new session only reuses the book, which is a partial prefix of the stored context.
In these cases, the session has to either be processed with expensive full attention or wait until a new index is built on the partial prefix.
To address the limitations, AlayaDB supports \textit{flexible context reuse}, which enables reusing the index of a stored context for efficient LLM inference when only a \textit{prefix} of the stored context is reused.
The challenge is to retrieve critical tokens only among the subset of tokens that are reused during searching within the full index.
Interestingly, the problem can be transformed into a well-studied problem in the database community called \textit{attribute filtering query} by considering the token id as an attribute.
The naive approach of attribute filtering is pruning those nodes that do not satisfy the attribute predicate.
However, this approach severely disrupts the connectivity of the graph index structure and leads to a significant decline in accuracy.
We improve DIPRS with a similar idea to~\cite{acorn} to solve this problem.
During each node exploration, the algorithm traverses both its neighbors and its neighbors' neighbors (2-hop neighbors).
Subsequently, the candidates that do not meet the filtering predicate are excluded.
This strategy enables AlayaDB to achieve a broader search scope during the retrieval process, which enjoys high efficiency and good accuracy.

\subsection{Computation Optimization}\label{sec:opt:compute}

\stitle{Index Construction Acceleration}
In AlayaDB, the index of a long context is constructed when a context is imported by \verb|DB.store()| and \verb|DB.import()|.
Although this procedure is usually offline, e.g., the book is pre-loaded before the service is launched, the cost is still not negligible.
We first analyze the overhead of index construction and then show how to optimize it.
The fine-grained index used in AlayaDB is RoarGraph~\cite{roargraph}, a state-of-the-art index for sparse attention~\cite{RA} due to its ability to handle vector search on Out-Of-Distribution (OOD) data.
Following RetrievalAttention~\cite{RA}, a RoarGraph is constructed for each query head, and the procedure can be divided into two stages:
(i) $\bm{q}$ to $\bm{k}$ kNN construction, which constructs a graph that links each query vector to its exact nearest key vectors, and
(ii) connectivity enhancement, which links each vector to its approximate nearest vectors that are produced by an ANNS search on the graph.
We observe that the overhead comes from the large number of indices and the slow kNN construction. 
We devise the following optimizations to reduce the overhead.

\sstitle{\textit{\underline{GQA-based index sharing}}} GQA is commonly used in state-of-the-art LLMs~\cite{llama} to reduce KV cache size.
It splits the $h_q$ query heads into $h_{kv}$ groups, where $h_{kv}$ is the number of key-value heads and $h_{kv} \textless h_{q}$. 
Queries heads within the same group will query the same key head, making one copy of KV cache able to be shared among a query group.
In AlayaDB, we share a RoarGraph among a query group by sampling query vectors from each query head and merging them into one RoarGraph in the stage of kNN construction.
In this way, the graph can still capture the distribution of all query heads while enjoying a speedup of ${h_q}/{h_{kv}}$ times by the reduction in the number of indices.
Our experiments show that index sharing only results in $\leq 3\%$ loss in top-$k$ recall, and does not affect the generation quality of end-to-end LLM inference.

\sstitle{\textit{\underline{GPU-based kNN construction}}} The kNN construction ~\cite{nn-descent} can be highly parallel, making it suitable for GPU.
We directly use the NVIDIA cuVS library~\cite{cuvs} to accelerate its construction on GPU.
To reduce the overhead of KV cache transfer, we process one layer at a time, which is compute-bound, and overlap it with the asynchronous CPU-GPU transmission in a pipeline manner.

\stitle{Late Materialization for Index Updating}
For each session, there are new KV caches generated from user inputs and model outputs.
It raises a design choice about when to physically update them into the context.
A straightforward solution is inserting the new KV cache to the existing index immediately after a new token is generated or input by users.
However, it significantly (i) increases the TPOT by the blocked index updating, and (ii) occupies two memory copies by maintaining a physical index for every session.
To address this problem, AlayaDB adopts a late-materialization strategy for index updating that does not affect the SLO.
By default, the newly generated KV cache is appended to the local window for retrieval.
The session will only be materialized into a new physical index when the \verb|DB.store()| API is explicitly called.
This is based on two practical observations that the user prompt and LLM generation following the long context are
(i) often short and (ii) often not reused across sessions.
Therefore, there is no need to early materialize the newly generated KV cache to the physical index in most cases.

\stitle{Data-centric Attention Engine}
AlayaDB is integrated with a native attention engine, which is optimized with data-centric computation.
Instead of computing attention after gathering the retrieved vectors~\cite{infllm,pqcache},
AlayaDB directly applies attention to the vectors where they reside, and then aggregates the attention results.
This data-centric mechanism can reduce the overhead of moving the large KV cache across different computing devices.
For example, when most of the context is on CPU and a window is cached on the GPU, partial attention of the two parts is computed independently in parallel and aggregated into a final attention output.
We use the same algorithm as FlashAttention~\cite{dao2022flashattention} and RetrievalAttention~\cite{RA} to compute and aggregate the partial attention outputs.

\subsection{Storage Optimization}\label{sec:opt:storage}
During LLM inference, AlayaDB retrieves a specific portion of vector data from each attention head of different attention layers to generate the next token. 
However, storing all the data in the limited CPU is not practical due to the large KV cache size.
To efficiently manage and reuse these vector data, we devise a vector file system and a purpose-built buffer manager within AlayaDB. 

\stitle{Vector File Systems} 
The vector file system in AlayaDB is built upon SPDK (Storage Performance Development Kit) to manage multiple vector files on disk in user space.
Specifically, each vector file stores the vectors of an attention head in a specific layer.
These stored vectors are organized into blocks, where vector indices and vector data are stored separately in different types of blocks, and vector index blocks are linked together in a graph structure.
The benefits of our layout are two-fold: (i) the graph-based structure allows for quick traversal and access to related vectors, and (ii) the vector data can be inserted or deleted without the need for restructuring the entire file.
Furthermore, the system can bypass traditional kernel I/O paths by leveraging SPDK, which significantly reduces latency and improves throughput.

\stitle{Purpose-built Buffer Manager}
AlayaDB has a purpose-built buffer manager built upon the underlying vector file system, which is designed to efficiently process the frequently used data in memory.
It employs the eviction strategy based on the corresponding block types.
For example, blocks storing the vector indices for attention heads are more likely to be kept in memory, as these vectors are frequently accessed during inference.
In contrast, blocks storing the vector data are only fetched once to calculate the attention score for each token.
The specific designs of it minimize redundant I/O operations by avoiding the need to retrieve them from secondary storage repeatedly.
Additionally, the buffer manager supports parallel access, enabling efficient processing in a multi-threaded environment.
\trim

\section{Use Cases of AlayaDB}\label{sec:applications}
AlayaDB provides easy-to-use interfaces and good performance for long context management and inference.
In this section, we present two LLM applications to demonstrate the use cases of AlayaDB.

\stitle{Financial Document Analysis}
AlayaDB can be used by financial companies to assist in their financial document analysis.
These documents are long, including financial statements, audit reports, business plans, etc.
Data analysts in the financial company leverage domain-specific LLMs with AlayaDB to analyze a large number of financial documents and generate summarizations for their purposes, e.g., the top-$10$ news of Hong Kong stock market in 2024.
The cost and latency of the document analysis service are reduced.

\stitle{Legal Assistant for Question Answering}
Law firms can utilize AlayaDB to enhance their intelligent legal assistant service.
The major difference between the legal assistant and other LLM applications is that answers to users' questions must be precise and accurate, e.g., comply with the rules of the government.
The legal documents can be stored as context in AlayaDB. 
Their domain-specific LLM answers user questions by the stored context to achieve low costs while guaranteeing result accuracy.

\section{Empirical Evaluation}\label{sec:exp}

\begin{table*}
  \centering
  \begin{center}
  \caption{Generation quality of different sparse attention algorithms in $\infty$-Bench. Each method used the number of [\textit{\textsf{initial}}+\textit{\textsf{last}}]+\textit{\textsf{retrieved}} tokens for attention computation.} 
  \trim 
    \label{tab:accuracy}
    \begin{tabular}{c|c|c|c @{\hspace{1.5\tabcolsep}} c @{\hspace{1.5\tabcolsep}} c @{\hspace{1.5\tabcolsep}} c @{\hspace{1.5\tabcolsep}} c @{\hspace{1.5\tabcolsep}} c @{\hspace{1.5\tabcolsep}} c @{\hspace{1.5\tabcolsep}} c|c}
        \Xhline{1px}
        Methods & Setting & SLO & \textsf{Retr.KV} & \textsf{Retr.P} & \textsf{Retr.N} & \textsf{Code.D} & \textsf{En.MC} & \textsf{En.QA} & \textsf{En.Sum} & \textsf{Math.F} & Avg. \\
        \hline
        Full Attention & --- & \ding{55} & 15.8 &	100.0 &	100.0 &	27.4 &	55.9 &	31.0 &	15.1  &	19.1 & 45.6 \\

        InfLLM & [128+4K]+4K tokens & \ding{51} & \textbf{25.0} & 100.0 &	100.0 &	28.2 &	39.7 &	18.7 &	15.3  &	23.4 & 	43.8 \\

        StreamingLLM & [128]+8K tokens & \ding{51} & 3.8 &	8.5 &	8.5 &	27.7 &	41.5 &	14.5 &	14.3  &	16.3 & 16.9 \\

        Top100 & [128+512]+100 tokens & \ding{51} & 6.6 & 100.0 & 100.0 & 30.0 & 56.3 & 29.7 & 15.2  & 24.6 & 45.3 \\

        Top2000 & [128+512]+2K tokens & \ding{55} & 14.6 & 100.0 &100.0 &	29.7 &	58.1 &	31.2 &	16.0  &	24.3 &	46.7 \\

        DIPRS & [128+512] tokens, $\beta=50$ & \ding{51} &  14.0 &	\textbf{100.0} &	\textbf{100.0} &	\textbf{30.7} &	\textbf{58.1} &	\textbf{32.1} &	\textbf{16.4} &	\textbf{24.9} &	\textbf{47.0} \\
        \Xhline{1px}
    \end{tabular}
  \end{center}
\end{table*}

In this section, we conduct experiments to evaluate the end-to-end performance of AlayaDB in long-context LLM serving.
In particular, we aim to answer the following two questions:
\squishlist
\item \textbf{Q1: Can AlayaDB achieve low latency, high quality, and low resource consumption at the same time for long-context LLM serving?} (Section~\ref{sec:exp:e2e})

\item \textbf{Q2: How is the effectiveness of our proposed performance optimizations in AlayaDB?} (Section~\ref{sec:exp:opt})
\squishend

\stitle{Hardware Configuration}
We conduct our experiments on a server with one NVIDIA L20 GPU (48GB memory) and two Intel XEON GOLD 6542Y CPUs with 48 cores, 96 threads and 512 GB DRAM in total.
We use AlayaDB together with HuggingFace Transformers~\cite{transformers} to support LLM inference.
We use the bfloat16 version of Llama-3-8B-Instruct-262k~\cite{llama-262k}, the long context variant of a state-of-the-art LLM model Llama~\cite{llama} for inference.
The model has 32 layers. Each layer includes 32 query heads and 8 key value heads.
Its weights occupy 15.4 GB GPU memory during inference.

\subsection{End-to-end Performance Evaluation}\label{sec:exp:e2e}
\subsubsection{TPOT, Quality and GPU Memory Consumption}
We compare our proposed DIPR query (see Section~\ref{sec:query:dipr}) with existing sparse attention algorithms and full attention algorithm w.r.t. \textit{Time-Per-Output-Token} (a.k.a., TPOT, the inference latency per token generation), quality, and GPU memory consumption in various LLM inference workloads.

\stitle{Tested Workloads}
We adopt a widely-used long-context benchmark $\infty$-Bench~\cite{inf-bench} for overall performance evaluation.
Specifically, we use 8 tasks in $\infty$-Bench including \textsf{Retr.KV, Retr.P, Retr.N, Code.D, En.MC, En.QA, En.Sum, Math.F}.
The average input context length of different tasks ranges from 43.9K to 192.6K tokens.
In the experiments, the index of the input context is built in advance and we only measure the latency of each token generation (TPOT).
We set the SLO of TPOT $\leq 0.24$s, which is the reading speed of human~\cite{distserve}.

\stitle{Compared Methods} We compare the following methods:
\squishlist
\item \textsf{Full Attention}, which stores the KV cache of full context and computes the full attention on GPU. 
\item \textsf{InfLLM}~\cite{infllm}, it is a coarse-grained algorithm which selects critical tokens in blocks and computes their attention on GPU.
\item \textsf{StreamingLLM}~\cite{streamingllm}, it is an algorithm that keeps a window of tokens in GPU memory for attention computation and simply drops the other tokens.
\item \textsf{Top-$k$}, it is a fine-grained algorithm which processes the top-$k$ similarity search with graph-based index on CPU.
We follow the RetrievalAttention~\cite{RA} to use RoarGraph~\cite{roargraph} as the index and align the window size.
In particular, the parameter $k$ is set as 100 and 2000 to study the performance of difference retrieved critical tokens in our experiments.
\item \textsf{DIPRS}, our proposed DIPR query processing algorithm for sparse attention. It also uses RoarGraph as the index.
The window size of DIPRS is the same as that of the top-$k$ query.
\squishend

\stitle{Result Analysis}
Table~\ref{tab:accuracy} shows the generation quality of different methods in all 8 tasks of $\infty$-Bench.
The quality score is measured by $\infty$-Bench.
First of all, our proposed \textsf{DIPRS} not only guarantees the SLO, but also achieves the best average generation quality among all the compared methods, as the last column in Table~\ref{tab:accuracy} shows.
{Moreover, it is the overall winner in 7 tasks out of the 8 tested tasks.}
For full attention, the SLO of TPOT is violated due to the expensive O($n$) computation cost even with the KV cache in GPU memory.
Compared with full attention, DIPRS can achieve a near or even higher quality in all tasks.
The result also confirms the superiority of DIPRS against traditional top-$k$ query.
Top-$k$ requires retrieving 2000 tokens to achieve a similar quality to DIPRS, but fails to meet the SLO because of retrieving too many tokens.
The generation quality of top-$k=100$ is worse than DIPRS in 6 tasks.
In \textsf{Retr.P} and \textsf{Retr.N}, both DIPRS and top-$k=100$ have the same performance.

To answer Q1, we perform in-depth analysis on two tasks (i.e., \textsf{EN.MC} and \textsf{EN.QA}) w.r.t. the generation quality and GPU memory consumption with user specified SLO.
We vary the number of cached tokens for InfLLM and StreamingLLM to investigate the relationship of generation quality and GPU memory consumption.
For top-$k=100$ and DIPRS, we use the same settings in Table~\ref{tab:accuracy}.
As Figure~\ref{fig:slo} shows, compared to all the other methods, DIPRS achieves the best generation quality and lowest GPU memory consumption while guaranteeing the SLO of TPOT.
Regarding the coarse-grained methods InfLLM and StreamingLLM, a large GPU memory is required to achieve higher accuracy, which limits the throughput of online serving and makes them impractical to run on the consumer-grade GPU, e.g., NVIDIA GTX4090 (24GB memory).
Compared to top-$k$, the generation quality of DIPRS surpasses top-$k$ due to its ability to identify the dynamic number of critical tokens for efficient sparse attention.

\begin{figure}
\small
    \centering
       \begin{tabular}{cc} 	
        \hspace{-0.03\columnwidth}\includegraphics[width=0.93\columnwidth]{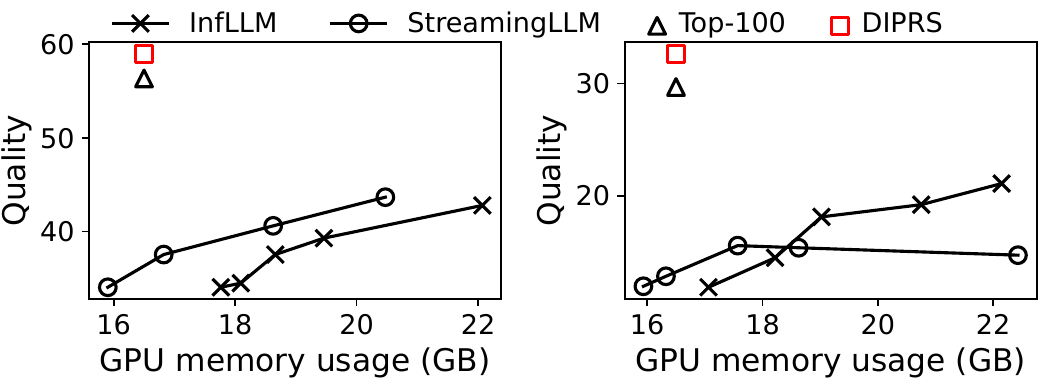}    \\
       \hspace{0.01\columnwidth} (1) EN.MC \hspace{0.32\columnwidth} (2) EN.QA\\
   \end{tabular}
   \trim
    \caption{Generation quality and GPU memory consumption with SLO guarantees}
    \label{fig:slo}
\end{figure}

\begin{figure}
\small
    \centering
       \begin{tabular}{cc} 	
       \includegraphics[width=0.50\columnwidth]{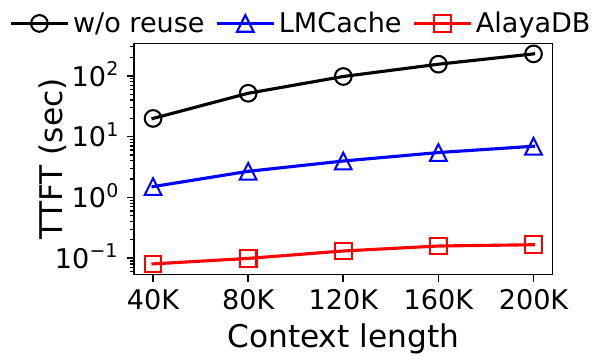} & \includegraphics[width=0.42\columnwidth]{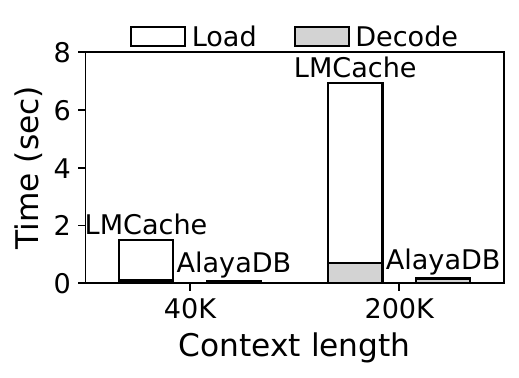} \\
       (a) TTFT & (b) Latency breakdown\\
   \end{tabular}
   \trim
    \caption{TTFT of long context reusing}
    \label{fig:ttft}
    \trim
\end{figure}

\subsubsection{\textit{Time-To-First-Token}: TTFT}
We compare AlayaDB with the state-of-the-art disaggregated KV cache service LMCache~\cite{lmcache,cachegen} to evaluate its ability to reduce the TTFT by efficiently reusing the stored long context in Figure~\ref{fig:ttft}.
LMCache stores the compressed KV cache of the full context, and supports context reusing by loading the KV cache into GPU.
In this experiment, we store the context in CPU memory in advance and measure the time of decoding the first token on this offloaded context as TTFT.
Figure~\ref{fig:ttft}(a) depicts the experimental results.
Firstly, reusing the KV cache is faster than recomputing the expensive prefill stage without reuse.
For example, our AlayaDB outperforms the w/o reuse by 2 to 3 orders of magnitude, see the red and black curves in Figure~\ref{fig:ttft}(a).
Secondly, the TTFT of AlayaDB is 19 to 42 times faster than LMCache, see the red and blue curves in Figure~\ref{fig:ttft}(a).
By analyzing the breakdown of latency of LMCache and AlayaDB in Figure~\ref{fig:ttft}(b), LMCache suffers from the slow KV cache loading, including decompressing and transferring from CPU to GPU.
The loading time increases linearly with the context length.
Instead of loading the KV cache, AlayaDB can directly decode on the offloaded KV cache with an extremely low latency, 
thus, resulting to a low TTFT for context reuse.
This experiment also confirms the analyzed limitation of the existing KV cache disaggregation architecture.
In other words, decoupling both attention computation and KV cache from the LLM inference engine as our proposal AlayaDB is a new opportunity for our community to develop fast and accurate LLM inference systems, which provides a huge optimization space.

\subsection{Effectiveness of Optimization Techniques}\label{sec:exp:opt}
\subsubsection{Index construction}
We conduct an ablation study of our proposed optimizations for the RoarGraph construction in Section~\ref{sec:opt:compute}.
In particular, we set the ratio of sampled queries for each index to 40\%, which means when building an index, the number of used query vectors is 40\% of the key vectors.
Figure~\ref{fig:build_index}(a) shows the index construction time under different context lengths.
The baseline method follows RetrievalAttention, which builds the index on CPU and builds one index for each query head, see the black curve.
Introducing GPU to build kNN and employing CPU-GPU pipeline can gain a speedup from 3$\times$ to 15$\times$, see the blue curve.
Then, by sharing the index in the same query group, index construction time can be further reduced from 12$\times$ to 62$\times$ compared to pure CPU baseline, as the red curve shows.
Moreover, index sharing also significantly reduces memory consumption by reducing the number of indexes.
As depicted in Figure~\ref{fig:build_index}(b), the index size can be 4$\times$ smaller than the GPU and CPU baseline without index sharing.

\begin{figure}
    \small
    \centering	
    \includegraphics[width=0.92\columnwidth]{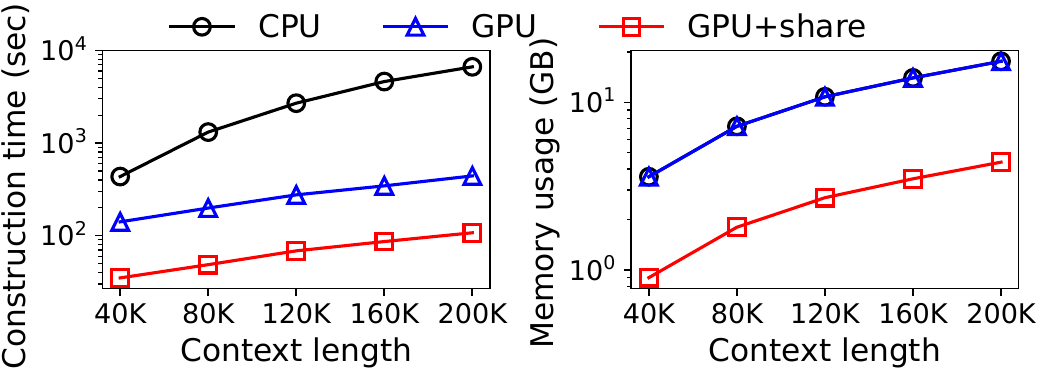} \\
    \hspace{0.1\columnwidth}  (a) Construction time  \hspace{0.15\columnwidth} (b) Memory consumption \\
    \trim
    \caption{Index construction optimization}
    \label{fig:build_index}
    \trim 
\end{figure}

\subsubsection{Filter-based DIPRS}
In Section~\ref{sec:opt:alg}, we introduce that AlayaDB leverages the attribute filtering with DIPRS algorithm to support partial context reuse.
In this experiment, we study the effect of this optimization to the generation quality and inference latency.
In particular, we conduct a micro-benchmark to evaluate the recall and latency of filter-based DIPRS search in the case of partial context reuse.
We fix the reused prefix length to 40K, and range the reuse ratio from 100\% to 20\% by varying the length of the stored context, i.e., the index size.
This micro-benchmark uses the KV cache generated by all heads in layer 1 during the \textsf{En.QA} task.
The 100\% reuse ratio means the stored context is fully reused, in other words, the filter-based DIPRS is the same as the original DIPRS without attribute filtering.
Figure~\ref{fig:filter} shows the measured recall and latency.
Firstly, the recall of filter-based DIPRS remains high with different reuse ratios, which guarantees the generation quality with partial context reuse in AlayaDB.
Secondly, when searching in a larger context with the same prefix length, the latency of filter-based DIPRS increases only slightly .
For example, the latency to search in 200K long context is only 1.13 ms higher than it is of 40K long context. 
Thus, AlayaDB guarantees the inference latency with good generation quality when partial context reuse is enabled.

\begin{figure}
    \centering	
       \includegraphics[width=0.85\columnwidth]{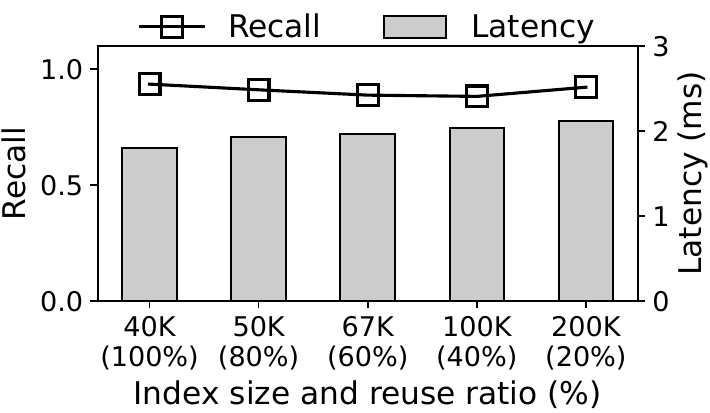}
       \trim
    \caption{Micro-benchmark of filter-based DIPRS}
    \label{fig:filter}
    \trim 
\end{figure}

\section{Conclusion} \label{sec:con}
At \textsf{AlayaDB.AI}, we built AlayaDB for efficient and effective long-context inference in LLM era.
From the architecture perspective, AlayaDB decouples the KV cache and attention computation from the LLM inference systems, and encapsulates them into a novel vector database system.
It optimizes the overall performance by co-optimizing attention computation and KV cache management in a monolithic manner. 
Collaborating with the inference engine, AlayaDB is able to guarantee the SLO while enjoying low resource consumption and high generation quality for long-context LLM inference.
The novel architecture poses new challenges and opportunities, including 
(i) implementing different parallelism strategies to enable distributed inference,
(ii) supporting more LLM inference engines like vLLM and SGLang, 
(iii) improving the query processing methods (or sparse attention algorithms) and query optimizer,
(iv) leveraging various storage tiers to store the KV cache of contexts, 
\deng{(v) utilizing heterogeneous hardware to accelerate the attention computation,
and (vi) designing attention-hybrid architecture for general-purpose vector databases.
}
We hope the researchers from different communities (e.g., database, machine learning, system) could tackle them together in the future.

\balance
\bibliographystyle{ACM-Reference-Format}
\bibliography{ref}

\end{document}